\documentclass[twocolumn,10pt]{IEEEtran}
\usepackage[utf8]{inputenc}
\usepackage[english]{babel}
\usepackage{amsmath,bm,amsthm,amsfonts,amssymb}
\usepackage{graphicx}
\usepackage{xcolor}
\usepackage{xspace}
\usepackage{subcaption}
\usepackage{mathtools}
\usepackage{cite}
\usepackage{multirow}
\usepackage{bigstrut}
\usepackage{graphicx}
\usepackage{epstopdf}
\usepackage{authblk}
\usepackage{algpseudocode}
\usepackage{algorithm}


\renewcommand{\paragraph}[1]{\vspace{3mm}\noindent\textbf{#1}}

\renewcommand{\b}[1]{{\bm{#1}}}   
\renewcommand{\(}{\left(}           
\renewcommand{\)}{\right)}

\newcommand{\x}{\b{x}}

\renewcommand{\u}{\b{u}}
\renewcommand{\v}{\b{v}}

\renewcommand{\L}{\b{L}}            
\newcommand{\U}{\b{U}}              
\newcommand{\bLambda}{\b{\Lambda}}  
\newcommand{\bOmega}{\b{\Omega}}  
\newcommand{\M}{\b{M}}

\newcommand{\I}{\b{I}}
\newcommand{\X}{\b{X}}
\newcommand{\C}{\b{C}}

\newcommand{\D}{\b{D}}              
\newcommand{\W}{\b{W}}              
\newcommand{\K}{\b{K}}

\newcommand{\Vs}{\ensuremath{\mathcal{V}}}
\newcommand{\Es}{\ensuremath{\mathcal{E}}}

\newcommand{\tl}{\theta_{\ell}}%
\renewcommand{\ll}{\lambda_{\ell}}%
\newcommand{\kk}{\omega_{k}}%
\newcommand{\eu}{\mathrm{e}}
\newcommand{\ju}{\mathrm{j}}

\newcommand{\LG}{\L_{\hspace{-1px}G}}              
\newcommand{\LT}{\L_{\hspace{-1px}T}}              


\newcommand{\UG}{\U_{\hspace{-1px}G}}              
\newcommand{\UT}{\U_{\hspace{-1px}T}}              


\newcommand{\JFT}[1]{\textrm{JFT}\hspace{-.5mm}\(#1\)}

\newcommand{\Rbb}{\mathbb{R}}


\DeclareMathOperator*{\prox}{prox}
\DeclareMathOperator*{\sign}{sgn}
\newcommand{\Hs}[1]{\textrm{H}\hspace{-.5mm}\(#1\)}
\DeclarePairedDelimiter{\norm}{\lVert}{\rVert}
\DeclarePairedDelimiter\abs{\lvert}{\rvert}%

\newcommand{\transpose}{\intercal}                      
\newcommand{\hermitian}{*}                      

\newtheorem{theorem}{Theorem} 
 
 \newtheorem{corollary}{Corollary}

\newcommand{\lrpar}[1]{\left( {#1} \right)}
\newcommand{\lrsquare}[1]{\left[ {#1} \right]}

\graphicspath{ {image/} }

\author[1]{Francesco Grassi}
\author[2]{Nathana\"{e}l Perraudin}
\author[2]{Benjamin Ricaud\vspace*{-1.5ex}}
\affil[1]{Department of Electronics and Telecommunications, Politecnico di Torino, Italy. Email: francesco.grassi@polito.it}
\affil[2]{Signal Processing Laboratory (LTS2), EPFL, Switzerland. Email: firstname.lastname@epfl.ch}

\title{Tracking Time-Vertex Propagation using Dynamic Graph Wavelets}

\begin{document}
\maketitle

\begin{abstract}
Graph Signal Processing generalizes classical signal processing to signal or data indexed by the vertices of a weighted graph. So far, the research efforts have been focused on static graph signals. However numerous applications involve graph signals evolving in time, such as spreading or propagation of waves on a network. The analysis of this type of data requires a new set of methods that fully takes into account the time and graph dimensions. We propose a novel class of wavelet frames named Dynamic Graph Wavelets, whose time-vertex evolution follows a dynamic process. We demonstrate that this set of functions can be combined with sparsity based approaches such as compressive sensing to reveal information on the dynamic processes occurring on a graph. Experiments on real seismological data show the efficiency of the technique, allowing to estimate the epicenter of earthquake events recorded by a seismic network. 
\end{abstract} 
\begin{IEEEkeywords} Graph signal processing, time-vertex signal processing, joint Fourier transform,  dynamic processes on graphs, wave equation \end{IEEEkeywords}

\section{Introduction}
Complex signals and high-dimensional datasets collected from a variety of fields of science, such as physics, engineering, genetics, molecular biology and many others, can be naturally modeled as values on the vertices of weighted graphs \cite{shuman2013emerging, sandryhaila2013discrete}. 
Recently, dynamic activity over networks has been the subject of intense research in order to develop new models to understand and analyze epidemic spreading~\cite{RevModPhys.87.925}, rumor spreading over social networks~\cite{guille2013information,de2013anatomy} or activity on sensor networks. The advances in the graph research has led to new tools to process and analyze time-varying graph and/or signal on the graph, such as multilayer graphs and tensor product of graphs~\cite{kivela2014multilayer,de2013mathematical}. However, there is still a lack of signal processing methods able to retrieve or process information on dynamic phenomena taking place over graphs. For example the wavelets on graphs~\cite{hammond2011wavelets,coifman2006diffusion} or the vertex-frequency transform~\cite{shuman2016vertex} are dedicated to the study of a static signal over a graph. 

Motivated by an increasing amount of applications, we design a new class of wavelet frames named Dynamic Graph Wavelets (DGW) whose time evolution depends on the graph topology and follows a dynamic process. Each atom of the frame is a time-varying function defined on the graph. Combined with sparse recovery methods, such as compressive sensing, this allows for the detection and analysis of time-varying processes on graphs. These processes can be, for example, waves propagating over the nodes of a graph where we need to find the origin and speed of propagation or the existence of multiple sources. 

We demonstrate the efficiency of the DGW on real data by tracking the origin of earthquake events recorded by a network of sensors.

\section{Preliminaries}
\subsection{Notation}
Throughout this contribution, we will use bold upper and lower case letters for linear operators (or matrices) $\M$ and column vectors $\v$, respectively. Furthermore, $\x$ will denote the vectorized version of $\X$. Complex conjugate, transpose and conjugate transpose are denoted as $\overline{\X}$, $\X^\transpose$ and $\X^\hermitian$, respectively. Lower case letters $a$ will denote scalars and upper case letters $A$ will denote fixed constant. For any symmetric positive definite matrix $\M$ with singular value decomposition $\M=\U \bLambda \U^\hermitian$, the matrix function $f(\M)$ is defined as ${f(\M)=\U f(\bLambda) \U^\hermitian}$, where the scalar function $f$ has been applied to each diagonal entry of $\bLambda$. The Kronecker product between two matrices (or vectors) is denoted as $\M_1 \otimes \M_2$, hence, the cartesian product between matrices (or vectors) is ${\M_1 \times \M_2 = \M_1 \otimes \I_2 + \I_1 \otimes \M_2}$, where $\I_n$ is the identity matrix with size equal to $\M_n$.

\subsection{Graph Signal Processing}
Consider a graph $G=(\Vs,\Es,\mathcal{W})$ of $N$ nodes and $E$ edges, where $\Vs$ indicates the set of nodes and $\Es$ the set of edges. The weight function $\mathcal{W}:\Vs\times\Vs \rightarrow \Rbb$ reflects to what extent two nodes are related to each other. $\W_G$ is the weight matrix associated to this function. The combinatorial Laplacian  ${\LG = \D_G -\W_G}$ associated to the graph $G$ is always symmetric positive semi-definite, therefore, due to the spectral theorem, it is characterized by a complete set of orthonormal eigenvectors~\cite{chung1997spectral}. We denote them by $\UG(n,\ell)=\u_{\ell}(n)$. The Laplacian matrix can thus be decomposed as $\LG= \UG\bLambda_G \UG^\hermitian$, with $\bLambda_G(\ell,\ell)=\ll$.   Let $\x: \Vs \rightarrow \Rbb$ be a graph signal defined on the graph nodes, whose $n$-th component $\x(n) \in \Rbb$ represents the value of signal at the $n$-th node.  The Graph Fourier Transform (GFT) of $\x$ is $\widetilde{\x}=\UG^\hermitian\x$ and its inverse $\x= \UG\widetilde{\x}$.

\section{Time-vertex representation}
\subsection{Definition}
Let $\X\in\Rbb^{N \times T}$ be a set of $N$ temporal signals of length $T$. The signals are evolving with time over the $N$ vertices of the graph $G$. We call the cartesian product between time and graph domain the time-vertex domain and $\X$ time-vertex signal. The time-vertex domain can be interpreted as a cartesian product between the generic graph ${G=(\Vs,\Es,\mathcal{W}})$ with Laplacian $\LG$ and the ring graph $G_T$  (assuming periodic boundary conditions in time) with Laplacian $\LT$. The joint Laplacian is
\begin{equation}\label{eq:jlap}
\lrpar{\LT\times \LG}\x=\LG \X + \X\LT
\end{equation}
where the second term is obtained using the property of the Kronecker product $\lrpar{\M_1\otimes\M_2}\x = \M_2\X\M_1^{\transpose}$.

In equation~\eqref{eq:jlap} the Laplacian $\LT$ represents the discrete second order derivative with respect to time:
\[
[\X \LT](n,t) = \X(n,t+1)-2\X(n,t)+\X(n,t-1).
\]
It can be decomposed as  ${\LT=\UT \bOmega \UT^{\hermitian}}$ where $\UT$ is the discrete Fourier basis~\cite{strang1999discrete} and ${\bOmega(k,k)=\omega_k}$ are the eigenvalues of the classical DFT that are linked to the normalized discrete frequencies $\frac{k}{T}$ by the following relation: 
\begin{equation}\label{eq:freq}
\omega_k = 2\left(\cos\left(\pi\frac{k}{T}\right)-1\right).
\end{equation}

\subsection{Joint Time-Vertex Fourier Transform}
Since the time-vertex representation is obtained from the cartesian product of the two original domains, the joint time-vertex Fourier transform (JFT) is obtained by applying the GFT on the graph dimension and the DFT along the time dimension~\cite{loukas2016frequency}:
\begin{equation*}
\widehat{\X}(\ell,k) = \dfrac{1}{\sqrt{T}}\sum_{n=1}^N\sum_{t=0}^{T-1}\X(n,t)\u_{\ell}^*(n)\eu^{-\ju \omega_k\frac{t}{T}}
\end{equation*}
that can be conveniently rewritten in matrix form as:
\begin{equation}\label{eq:jft}
\widehat{\X} =\JFT{\X} = \UG^{\hermitian}\X\overline{\U}_T.
\end{equation}
The spectral domain helps in defining the localization of functions on the graph, as in~\cite{hammond2011wavelets}.

\section{Dynamic Graph Wavelets}

The DGW differ from classical wavelets as they are not dilated versions of an initial mother wavelet. Indeed, they are propagating functions on the graph that evolve in time, according to a PDE. We will use the joint representation for the signal to characterize spectral relationships between the two domains and solve the PDE in the spectral domain obtaining an useful tool to analyze time-vertex signals that evolve according to this dynamic process. Finally, we will use the kernel to build the set of DGW. Because of the lack of translation invariance of graph, the kernel will always act in graph spectral domain and will be localized on the graph using the localization operator as in~\cite{hammond2011wavelets}. On the contrary the time dependence can be defined either in the spectral or in the time domain.  In this contribution, we use for convenience the time domain. 

\subsection{Heat diffusion on graph}
Let us first provide a basic example for our model. The diffusion of heat on a graph can be seen as a simple dynamic process over a network. It is described by the following (discretized) differential equation:
\begin{equation}
\X(n,t)-\X(n,t-1) = -s  \LG \X, 
\end{equation}
with initial distribution ${\X(i,0)=\b{\psi}(i)}$. The closed form solution is given by $\X = \eu^{-s t \LG}\b{\psi} $. Therefore, the heat diffusion spectral kernel is
\begin{equation}\label{eq:heat}
\widetilde{K}(s\ll,t)=\eu^{-s \ll t}
\end{equation} 
where the parameter $s$ is the thermal diffusivity in classic heat diffusion problems and can be interpreted as a scale parameter for multiscale dynamic graph wavelet analysis~\cite{coifman2006diffusion}. This equation models the spreading of a function on the graph over time. However, in the present work we want to focus on a propagating process, moving away from an initial point as time passes. Hence we introduce a second model.

\subsection{The wave equation on graphs}
To model functions evolving on a graph, we will use mainly the PDE associated to the wave equation. Here, the wave equation is defined on the graph, and, as such, differs from the standard one. This partial differential equation relates the second order derivative in time to the spatial Laplacian operator of a function:
\begin{align} 
(\LT\otimes\I_G)\, \x&=-\alpha^2 ( \I_T\otimes\LG ) \, \x \nonumber \\
\X \LT&=-\alpha^2 \LG \, \X \label{eq:wavepde}
\end{align}
where $\alpha$ is the propagation speed parameter. Assuming a vanishing initial velocity, i.e. first derivative in time of the initial distribution equals zero, the solution to this PDE can be written using functional calculus as~\cite{durran2013numerical}:
\begin{equation}\label{eq:wave_sol}
\X(\,\cdot\,,t)=K(s\LG,t)\b{\psi}=\K_{t,s}\b{\psi}
\end{equation}
where ${\b{\psi}(n)=\X(n,0)}$ and ${\K_{t,s}=K(s\LG,t)}$ is the matrix function $K$ applied to the scaled Laplacian $s\LG$ and parametrized by the time $t$.
Notice that we use the scale ${s=\alpha^2}$ to represent the speed parameter of the propagation. Substituting \eqref{eq:wave_sol} into \eqref{eq:wavepde}, we obtain
\begin{equation}\label{eq:wave_sub}
\K_{t,s}\b{\psi}\L_T=-s \LG \K_{t,s}\b{\psi}.
\end{equation}
To obtain a closed form solution for the kernel $K_s$ we analyze the equation~\eqref{eq:wave_sub} in the graph spectral domain:
\begin{equation} \label{eq:condpde}
\widetilde{\K}_{t,s}\widetilde{\b{\psi}} \L_T  = -s\b{\Lambda}_G \widetilde{\K}_{t,s}\widetilde{\b{\psi}}
\end{equation}
where ${\widetilde{\K}_{t,s}=K(s\b{\Lambda}_G,t)}$. Equation~\eqref{eq:condpde} requires $K(s\ll,t)$ to be an eigenvector of $\L_T$. From~\eqref{eq:freq} we obtain:
\begin{equation}\label{eq:wave_ker}
\widetilde{K}(s\ll,t) =  \cos\lrpar{t \arccos\lrpar{1- \frac{s\ll}{2}} }.
\end{equation}
Since the $\arccos(x)$ is defined only for ${x\in\lrsquare{-1,1}}$, to guarantee filter stability the parameter $s$ must satisfy ${s<4/\lambda_{max}}$. This result is in agreement with stability analysis of numerical solver for discrete wave equation \cite{Duncan1998numerical}. 

The wave equation is a hyperbolic differential equation and several difficulties arise when discretizing it for numerical computation of the solution\cite{durran2013numerical}. Moreover, the graph being an irregular domain, the solution of the above equation is not  any more a smooth wave after few iterations. Here we focus on the propagation (away from its origin) of the wave rather than its exact expression.

\subsection{General definition}
In the following, we will generalize the DGW using arbitrary time-vertex kernel. The goal is to design a transform that helps detecting a class of dynamic events on graphs. These events are assumed to start from an initial joint distribution $\b{\Phi}_{m,\tau}(n,t) = [\b{\psi}_{m}\otimes\b{\phi}_{\tau}^{\transpose}](n,t)$ localized around vertex $m$ and time $\tau$. The general expression of the DGW $W_{m,\tau,s}$ at time $t$ and at vertex $n$ can be written as:
\begin{equation}\label{eq:dgw}
W_{m,\tau,s}(n,t)=\left[\K_{t,s}\b{\Phi}_{m,\tau}\right](n,t), \hspace{0.1cm} \forall\, t\geq0
\end{equation}
where, like earlier, ${\K_{t,s}=K(s\LG,t)}$ is the matrix function $K$ applied to the scaled Laplacian $s\LG$ and parametrized by the time $t$. Depending on the dynamic graph kernel, the DGW can resemble a wave solution of the wave equation, a diffusion process, or a generic dynamic process.

\subsection{Causal Damped Wave Dynamic Graph Kernel}
We define the DGW to be the solutions of Eq.~\eqref{eq:wavepde}, for different $\alpha=\sqrt{s}$. In addition, we require two other properties. Firstly, we want the wave to be causal, i.e. to have an initial starting point in time.
Secondly, in many applications, the wave propagation is affected by an attenuation over time. We thus introduce a damping term. The DGW defined in the graph spectral domain is thus
\begin{equation}\label{eq:dw_dgw}
\widetilde{W}_{s}(\ll,t)=\Hs{t}\eu^{-\beta t}\cos\lrpar{t \arccos\lrpar{1- \frac{s\ll}{2}} },
\end{equation}
where $\Hs{t}$ is the Heaviside function and $\eu^{-\beta t}$ is the damped decaying exponential function in time. 

The damping term has two remarkable effects. Firstly, it lower the importance of the chosen boundary conditions in time (e.g. periodic or reflective) as the wave vanishes before touching them. Secondly, it favors the construction of a frame of DGW: we will see in the following that $\beta$ is involved in the lower frame bound of the DGW.

\subsection{Dynamic Graph Frames}
We define $S_W$ as the DGW analysis operator. The wavelet coefficients $C$ are given by
\begin{align*}
\C(m,\tau,s)&=\left\{ S_{W}(\X)\right\} (m,\tau,s) = \sum_{n,t}W_{m,\tau,s}(n,t)\X(n,t)\\
  &= \frac{1}{\sqrt{T}}\sum_{\ell,k}\widehat{W}_{s}(\ll,\omega_k)\widehat{\X}(\ell,k)\u_{\ell}(m)\eu^{-\ju \omega_k\frac{\tau}{T}},
\end{align*}
and the synthesis operator gives
\begin{align*}
\X'(n,t) &= \left\{ S_{W}^{\transpose}(\b{C})\right\} (n,t) = \sum_{j,\tau,s}W_{m,\tau,s}(n,t)\C(m,\tau,s)\\
&= \frac{1}{\sqrt{T}} \sum_{s}\sum_{\ell,k}\widehat{W}_{s}(\ll,\omega_k)\widehat{\C}(\ell,k,s)\u_{\ell}(i)\eu^{-2\pi \ju \omega_k\frac{\tau}{T}}.
\end{align*}
The following theorem provides conditions to assert that no information will be lost when these operators are applied to a time-vertex signals. This implies that any signal $\X$ can be constructed from the synthesis operation: $ \X = S_{W}^{\transpose}(\C)$.
\begin{theorem}\label{theo:frame}
If the set of time-vertex DGW satisfies:
\begin{align*}
A&=\min_{l,k}\sum_{s}\abs{\widehat{W}_{s}(\ll,\omega_{k})}^{2}>0\\
B&=\max_{l,k}\sum_{s}\abs{\widehat{W}_{s}(\ll,\omega_{k})}^{2}<\infty \qedhere
\end{align*} 
with $0 < A \leq B < \infty$, then $S_W$ is a frame operator in the sense:
\begin{equation}\label{eq:framebound}
A\norm{\X}_2^2\leq \norm*{\left\{ S_{W}(\X)\right\}}^2_2 \leq B\norm{\X}_2^2
\end{equation}
for any time-vertex signal $\X$ with $\norm{\X}_2>0$. 
\end{theorem}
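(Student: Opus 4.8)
The plan is to diagonalise the analysis operator $S_W$ in the joint time-vertex spectral domain, where it acts as a bank of scalar multipliers, and then read off the frame constants directly from $\sum_s\abs{\widehat{W}_{s}(\ll,\omega_{k})}^2$.

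First I would record that the JFT of~\eqref{eq:jft} is unitary on $\Rbb^{N\times T}$: the GFT $\X\mapsto\UG^\hermitian\X$ preserves the Frobenius norm because $\UG$ is orthonormal, and the normalised DFT $\X\mapsto\X\overline{\U}_T/\sqrt{T}$ does as well, so by Parseval $\norm{\X}_2^2=\norm{\widehat{\X}}_2^2=\sum_{\ell,k}\abs{\widehat{\X}(\ell,k)}^2$. Next, for a fixed scale $s$, I would rewrite the map $\X\mapsto\C(\,\cdot\,,\,\cdot\,,s)$ using the spectral expression for $\C(m,\tau,s)$ stated just before the theorem: it is the composition of the JFT of $\X$, pointwise multiplication of $\widehat{\X}(\ell,k)$ by the multiplier $\widehat{W}_{s}(\ll,\omega_{k})$, and an inverse JFT, the atoms $\u_{\ell}(m)\,\eu^{-\ju\omega_k\tau/T}$ in that formula being precisely the reconstruction kernel (up to conjugations that drop out below). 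Since the inverse JFT is an isometry, this yields, for every $s$,
\[
\sum_{m,\tau}\abs{\C(m,\tau,s)}^2=\sum_{\ell,k}\abs{\widehat{W}_{s}(\ll,\omega_{k})}^2\,\abs{\widehat{\X}(\ell,k)}^2 .
\]

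Summing this identity over the finitely many scales $s$ and interchanging the order of summation gives
\[
\norm*{\left\{S_{W}(\X)\right\}}_2^2=\sum_{\ell,k}\Bigl(\,\sum_s\abs{\widehat{W}_{s}(\ll,\omega_{k})}^2\Bigr)\abs{\widehat{\X}(\ell,k)}^2 .
\]
By hypothesis the bracketed factor lies in $[A,B]$ for every pair $(\ell,k)$, so it can be bounded above by $B$ and below by $A$; the first step then turns the remaining sum $\sum_{\ell,k}\abs{\widehat{\X}(\ell,k)}^2$ back into $\norm{\X}_2^2$, which is exactly~\eqref{eq:framebound}. The lower bound $A>0$ is what makes $S_W$ injective and hence guarantees that $\X$ can be recovered from its coefficients via the synthesis operator.

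I expect the one genuinely delicate step to be the factorisation in the middle paragraph: one must check carefully that localising the dynamic graph kernel at vertex $m$ and time $\tau$ and then correlating against $\X$ really does reduce to the scalar spectral multiplier $\widehat{W}_{s}(\ll,\omega_{k})$ followed by an inverse JFT — that the localisation contributes nothing beyond the Fourier atoms $\u_{\ell}(m)\,\eu^{\pm\ju\omega_k\tau/T}$, and that the complex conjugations carried by $\widehat{W}_{s}$ and by the DFT atoms cancel so that only the moduli $\abs{\widehat{W}_{s}(\ll,\omega_{k})}$ survive in the norm identity. Once that identity is established, the remainder is routine bookkeeping with finite sums and the Parseval relation.
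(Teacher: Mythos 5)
Your proposal is correct and follows essentially the same route as the paper: both reduce to the identity $\norm*{\left\{S_{W}(\X)\right\}}_2^2=\sum_{\ell,k}\bigl(\sum_s\abs{\widehat{W}_{s}(\ll,\omega_{k})}^2\bigr)\abs{\widehat{\X}(\ell,k)}^2$ and then bound the spectral multiplier by $A$ and $B$ before invoking Parseval. The only difference is presentational --- the paper establishes that identity by expanding the square of the analysis coefficients and collapsing the cross terms via orthonormality of the atoms $\u_{\ell}(m)\,\eu^{-\ju\omega_k\tau/T}$, whereas you appeal directly to the unitarity of the (inverse) JFT applied to the multiplied spectrum, which is the same computation packaged as an isometry argument.
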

\begin{proof}
In the joint spectral domain we can write:
\begin{align*}
&\norm*{\left\{ S_{W}(\X)\right\}}^2_2 = \sum_{m, \tau,s} \abs*{\left\{ S_{W}(\X)\right\} (m,\tau,s)}^2\\
&=\sum_{m, \tau, s}\abs*{\sum_{n, t} \X(n,t) \sum_{\ell, k} \widehat{W}_{s}(\ll,\kk) \u_{\ell}^{*}(n)\u_{\ell}(m)\eu^{-\ju\kk\frac{t-\tau}{T}}}^2 \\
&=\sum_{s, m, \tau}\lrpar{\sum_{n, t} \X(n,t) \sum_{\ell, k} \widehat{W}_{s}(\ll,\kk) \u_{\ell}^{\hermitian}(n)\u_{\ell}(m)\eu^{-\ju\kk\frac{t-\tau}{T}}}\\
&\lrpar{\sum_{n',t'} \X(n',t') \sum_{\ell', k'} \hat{W}_{s}(\lambda_{\ell'},\omega_{k'}) \u_{\ell'}^{\hermitian}(n')\u_{\ell'}(m)\eu^{-\ju\omega_{k'}\frac{t'-\tau}{T}}}^{\hermitian}\\
&=\sum_{s,\ell, k} \widehat{W}_{s}(\ll,\kk) \widehat{W}_{s}^{*}(\ll,\kk) \widehat{\X}(\ell, k)\widehat{\X}^{\hermitian}(\ell, k)\\
&=\sum_{s,\ell, k} \abs{\widehat{W}_{s}(\ll,\kk)}^{2}\abs{\widehat{\X}(\ell, k)}^{2}=\sum_{s}\norm{\widehat{W}_{s}\cdot \widehat{\X}}_2^2.
\end{align*}

Using Parseval relation $\|\hat{\X}\|_2=\|\X\|_2$, we find
$$
A\|\X\|_2^2 = A\|\widehat{\X}\|_2^2 \leq \norm*{\left\{ S_{W}(\X)\right\}}^2_2 \leq B\|\widehat{\X}\|_2^2 = B\|\X\|_2^2
$$
where $\norm{\cdot}_2$ is used here for the Froebenius norm, i.e: ${\norm{\X}_2 =\norm{\x}_2}$.
\end{proof}

In the following we will use this condition to prove that the DGW given in equation~\eqref{eq:dw_dgw} is a frame.
\begin{corollary}
The set of DGW defined by Eq.\eqref{eq:dw_dgw} is a frame for all $\beta>0$.
\end{corollary}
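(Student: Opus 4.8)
The plan is to reduce to Theorem~\ref{theo:frame}: it is enough to exhibit constants $0<A\le B<\infty$ bounding $\sum_s\abs{\widehat W_s(\ll,\kk)}^2$ from below and above, uniformly over the finite set of spectral pairs $(\ell,k)$. Since the kernel in \eqref{eq:dw_dgw} is already written in the graph spectral variable, forming its JFT only requires a DFT along time, and the Heaviside factor turns this into a one-sided sum. Writing $\tl:=\arccos\!\lrpar{1-\tfrac{s\ll}{2}}\in[0,\pi]$ as in \eqref{eq:wave_ker}, the first step is thus
\begin{equation*}
\widehat W_s(\ll,\kk)=\frac{1}{\sqrt T}\sum_{t\ge 0}\eu^{-\beta t}\cos(\tl t)\,\eu^{-\ju\kk t/T}.
\end{equation*}

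Next I would split $\cos(\tl t)=\tfrac12(\eu^{\ju\tl t}+\eu^{-\ju\tl t})$ and recognise two geometric series whose common ratios $r_{\pm}=\eu^{-\beta}\eu^{\pm\ju\tl}\eu^{-\ju\kk/T}$ both have modulus $\eu^{-\beta}<1$ (this is exactly where $\beta>0$ enters). Summing them gives the closed form
\begin{equation*}
\widehat W_s(\ll,\kk)=\frac{1}{\sqrt T}\cdot\frac{1-\eu^{-\beta}\cos\tl\,\eu^{-\ju\kk/T}}{(1-r_{+})(1-r_{-})}.
\end{equation*}
The third step is a pair of elementary modulus estimates: each denominator factor satisfies $\abs{1-r_{\pm}}\in[1-\eu^{-\beta},\,1+\eu^{-\beta}]$, and the numerator has modulus in $[1-\eu^{-\beta}\abs{\cos\tl},\,1+\eu^{-\beta}\abs{\cos\tl}]\subseteq(0,2)$, the lower end being strictly positive precisely because $\eu^{-\beta}<1$. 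Hence $\abs{\widehat W_s(\ll,\kk)}$ lies between two positive numbers depending only on $\beta$ (and $T$), uniformly in $\ell,k,s$; summing over the finitely many scales and taking $\min$/$\max$ over $(\ell,k)$ yields $0<A\le B<\infty$, and Theorem~\ref{theo:frame} finishes the proof. The resulting lower bound is governed by the factor $(1-\eu^{-\beta})^2$, which is the dependence on $\beta$ announced in the text.

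The step that needs care is passing from the one-sided infinite sum above to the actual periodic DFT over $t=0,\dots,T-1$: truncation introduces boundary terms of size $\mathcal{O}(\eu^{-\beta T})$ in both numerator and denominator. I would handle this exactly as the damping remark suggests — for $T$ large enough relative to $1/\beta$ these corrections are dominated by the gap $1-\eu^{-\beta}$ appearing in the lower estimate, so the frame inequality \eqref{eq:framebound} survives with the same $\beta$-dependent bounds. I expect this boundary bookkeeping, rather than the geometric-series computation, to be the only genuinely delicate point.
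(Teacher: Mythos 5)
Your proposal follows essentially the same route as the paper's proof: split $\cos(\tl t)$ into exponentials, sum the two geometric series with ratio of modulus $\eu^{-\beta}<1$, obtain the same closed form, and deduce the lower bound $(1-\eu^{-\beta})/4$ and finiteness of $B$ from the strict gap $\eu^{-\beta}<1$. The differences are cosmetic --- you bound the two denominator factors $\abs{1-r_{\pm}}$ directly where the paper locates the roots of $1-2z\cos\tl+z^2$ on the unit circle, and you additionally flag the finite-$T$ truncation of the one-sided sum, a point the paper passes over silently.
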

\begin{proof}
We define $\tl = \arccos(1-\frac{s\ll}{2})$. The DGW in the joint spectral domain is
\begin{align*}
&\abs*{\widehat{W}(\ll,\kk)}^2	=	\abs*{\sum_{t>0}\eu^{-\beta t}\cos(t\arccos(1-\frac{s\ll}{2}))\eu^{-\ju \kk t}}^2\\
&=\abs*{\frac{1}{2}\lrpar{\dfrac{1}{1-\eu^{-(\beta + \ju(\kk+\tl)}}+ \dfrac{1}{1-\eu^{-(\beta +\ju(\kk-\tl)}}}}^2\\
&=\abs*{\frac{1}{2} \dfrac{2 - \eu^{-\beta- \ju\kk}(\eu^{-\ju\theta}+\eu^{\ju\theta })}{1 - \eu^{-\beta- \ju\kk}(\eu^{-\ju\theta}+\eu^{\ju\theta}) + \eu^{ -2\beta -2\ju \kk } }}^2\\
 &= \abs*{\dfrac{1 - \eu^{-\beta- \ju\kk}\cos\tl}{1 - 2\eu^{-\beta- \ju\kk}\cos{\tl} + \eu^{ -2\beta -2\ju \kk } }}^2\geq \abs*{\dfrac{1-\eu^{-\beta}}{4}}^2>0.
\end{align*}
Hence $A>0$. 
To prove that $B<\infty$ we find the roots of the denominator of the above expression. We call $z=\eu^{-\beta- \ju\kk}$ and we obtain the following equation:
\begin{align*}
&1 - 2z\cos{\tl} + z^2=0
\end{align*}
whose roots are $\abs{\dot{z}}=\abs{\cos(\tl)\pm \ju\sin(\tl)}=1$. Since ${\abs{z}=\eu^{-\beta}}$, $\abs{z}\neq\abs{\dot{z}}\ \forall \beta>0$.
\end{proof}

\begin{algorithm}[b]
\caption{FISTA for Problem~\eqref{eq:optim}}
\label{CHalgorithm}
\begin{algorithmic}
\State INPUT: $c_1 = y$, $u_0 = y$, $t_1 = 1$, $\epsilon > 0$
\For{ $j = 1,\dots J$ }
\State $u_{j+1} = \prox_{\nu_jh}(c_{j}-\nu_j\nabla g(c_j))$
\State $t_{j+1} = \frac{1+\sqrt{1+4t_j^2}}{2}$
\State $c_{j+1} = u_j +\frac{t_j-1}{t_{j+1}} (u_j-u_{j-1})$
\If{$\frac{\|c_{j+1} - c_{j}\|_2^2}{\| c_{j}\|_2^2+\delta}<\epsilon$}
\State BREAK
\EndIf
\EndFor
\end{algorithmic}
\end{algorithm}
\section{Sparse representation}
Particular processes, such as wave propagation, can be well approximated by only a few elements of the DGW, i.e. the DGW transform of the signal is a sparse representation of the information it contains. In that case, we inspire ourselves from compressive sensing techniques and define the following convex minimization problem
\begin{equation}\label{eq:optim}
\dot{\C} = \arg\min_{\C}\|S_W^{\transpose}(\C)-\b{Y}\|_{2}^{2}+\gamma\|\C\|_{1}.
\end{equation}
Here $\gamma$ is the parameter controlling the trade-off between the fidelity term $\|S^{\transpose}_W\C-\b{Y}\|_{2}^{2}$  and the sparsity assumption of the DGW coefficients ${\|\C\|_{1}=\sum_{m,\tau,s}\abs{\C(m,\tau,s)}}$. The solution $\dot{\C}$ provides useful information about the signal. Firstly, the synthesis $S^{\transpose}_W\dot{\C}$ is a de-noised version of the original process. Secondly, from the position of the non zero coefficients of $\dot{\C}$, we can derive the origin on the graph $m$ and in time $\tau$, the speed of propagation $s$ and the amplitude $|\C(m,\tau,s)|$ of the different waves. 

Problem~\eqref{eq:optim} can be solved using proximal splitting methods~\cite{combettes2011proximal} and the fast iterative soft thresholding algorithm (FISTA)~\cite{beck2009fast} is particularly well suited. 
Let us define $g(\C)=\|S^{T}_W\C-\b{Y}\|_{2}^{2}$, the gradient of $g$ is ${\nabla g(\C) =  2 S_W(S^{T}_W \C -\b{Y}).}$  
Note that the Lipschitz constant of $\nabla_g$ is $2D$. We define the function $h(\C) = \gamma \|\C\|_1$. The proximal operator of $h$ is the $\ell_1$ soft-thresholding given by the elementwise operations (here $\circ$ is the Hadamard product) 
\[
\prox_{\gamma, h }(\C) = \C + \sign(\C) \circ \max (|\C|-\gamma ,0).
\]
The FISTA algorithm \cite{beck2009fast} can now be stated as Algorithm \ref{CHalgorithm},
where $\nu$ is the step size (we use $\nu = \frac{1}{2D}$), $\epsilon$ the stopping tolerance and $J$ the maximum number of iterations. $\delta$ is a very small number to avoid a possible division by $0$. %
Our implementation of the frame $S_W$ is based on the GSPBox~\cite{perraudin2014gspbox} and Problem \eqref{eq:optim} is solved using the UNLocBoX~\cite{perraudin2014unlocbox}.

\begin{table}
 \normalsize
\centering
\begin{tabular}{c c c c c c}
& \multicolumn{5}{c}{SNR [dB]}\\ \cline{2-6}
\bigstrut[t]
Event ID & 100 & 20 & 10 & 2 & 0 \\
\hline
\bigstrut[t]
2014p139747 & 28.88 &  28.92 &  29.24 &  31.41 &  32.35\\
2015p822263 & 28.35 &  28.25 &  29.13 &  28.53 &  29.97\\
2015p850906 & 17.80 &  18.38 &  15.18 &  16.60 &  21.26\\
2016p235495 & 37.39 &  37.41 &  37.58 &  37.83 &  37.83\\
\hline
\end{tabular}
\caption{Distance in kilometers between real and estimated epicenter for different seismic events and decreasing SNR.}
\label{tab:error}
\end{table}
\section{Application}
\subsection{Earthquake epicenter estimation}
We demonstrate the performance of the DGW on a source localization problem, where a dynamical event evolves according to a specific time-space behavior. We analyze waveforms recorded by seismic stations geographically distributed in New Zealand, connected to the GeoNet Network. The graph is constructed using the coordinates of the available seismic stations and connecting the closest nodes. We consider different seismic events whose epicenters were located in different areas of New Zealand\footnote{The dataset is freely available at http://www.geonet.org.nz/quakes}. Each waveform consists of $300$ seconds sampled at $100\,$Hz, starting few seconds before the seismic event. Seismic waveforms can be modeled as oscillating damped waves. This model is valid when the spatial domain where the waves are propagating is a continuous domain or a regular lattice~\cite{lowrie2007fundamentals}. Here, the domain is the network of sensors and we assume that a damped wave propagating on this network is still a good approximation. Thus we expect the waveforms of the DGW defined in Eq.\eqref{eq:dw_dgw} to be good approximations of the seismic waves recorded by the sensors. We create a frame of DGW, $S_W$, using 10 different values for the propagation velocity parameter $s$ linearly spaced between 0 and 2 (corresponding to physically plausible values). The damping $\beta$ was fixed and chosen to fit the damping present in the seismic signals. 

To estimate the epicenter of the seismic event we solved the convex optimization problem \eqref{eq:optim}. The sparse matrix $\C$ contains few non-zero coefficients corresponding to the waveforms that constitute the seismic wave. We averaged the coordinates of the vertices corresponding to the sources of the waves with highest energy coefficients. Figure~\ref{fig:seismic} shows the results of the analysis for different seismic events. For each plot, the recorded waveforms are shown superposed using different colors. Real and estimated epicenters are shown respectively with a red square and a black circle on the graph plots. 

Finally, we investigated the performance of the source localization algorithm by adding white Gaussian noise to the signals, decreasing the SNR of the waveforms from $100$ to $0$\,dB, such that the SNR is the same for all the waveforms. Table~\ref{tab:error} shows the distance between the real and estimated epicenter in kilometers in four different events and increasing amounts of noise. The small variations of the results demonstrate the high robustness of the method.

\begin{figure*}
\centering
\includegraphics[width=\textwidth]{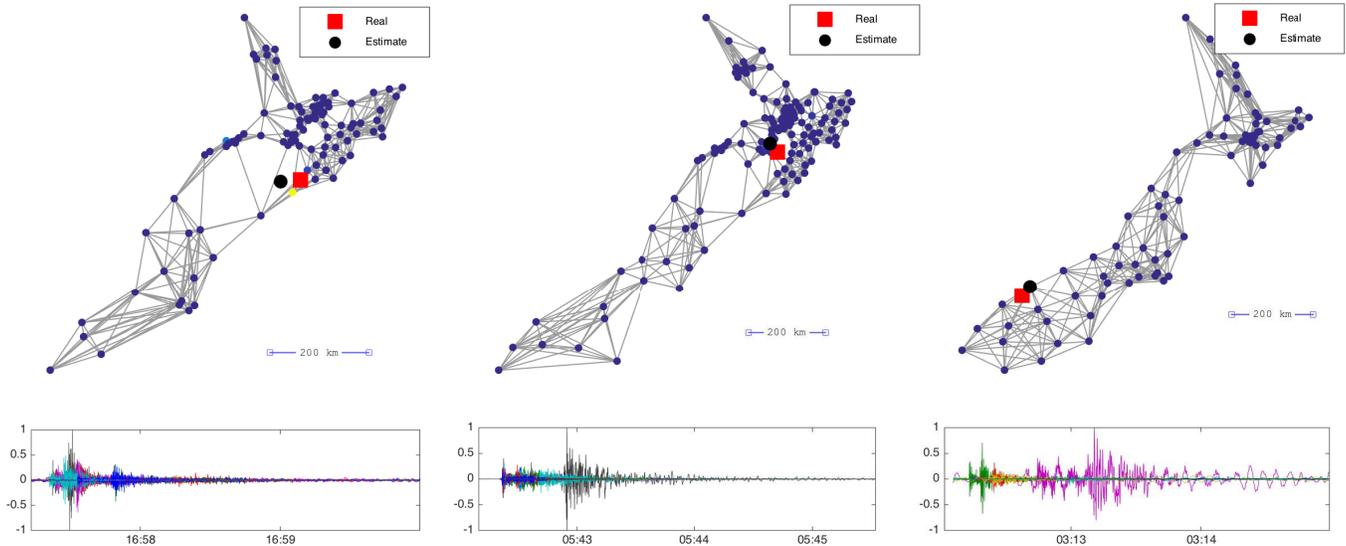}
\caption{Results for 3 different seismic events in New Zealand. Top: the graph is created using the coordinates of the available stations for each event and connecting the closest stations. The red squares and the black dots are the true and estimated sources of the seismic wave respectively. Bottom: Signal recorded by the sensors (different color per sensor) over time for each event.}\label{fig:seismic}
\end{figure*}

\bibliographystyle{IEEEbib}
\bibliography{bibliography}

\end{document}